\documentclass[11pt]{article}
\usepackage{fullpage}
\usepackage{graphicx,amssymb,amsmath}
\usepackage{amsthm}

\usepackage{textcomp}
\usepackage{xr-hyper}

\usepackage[utf8]{inputenc}
\usepackage{amsfonts}
\usepackage{mathtools}
\usepackage{physics}
\usepackage{comment,float}
\usepackage{pgfplots}
\usepgfplotslibrary{groupplots}

\usepackage[ruled,vlined]{algorithm2e}

\usepackage{hyperref}
\hypersetup{
    colorlinks=true,
    linkcolor=blue,
    filecolor=magenta,      
    urlcolor=cyan,
}

\usepackage{subfig}
\newtheorem{theorem}{Theorem}
\newtheorem{lemma}{Lemma}

\newcommand{\OPT}{\texttt{OPT}}

\newcommand{\mylabel}{\mathsf{label}}
\newcommand{\NN}{\mathsf{NN}}
\newcommand{\Lip}{\mathsf{Lip}}

\newcommand{\diam}{\mathsf{diam}}

\newcommand{\ball}{\mathsf{ball}}
\newcommand{\corruption}{\mathsf{corruption}}
\newcommand{\eps}{\varepsilon}

\newcommand{\AlgKNNFlip}{\textsf{Poison-$k$-NN}}

\newcommand{\gls}[1]{\textbf{#1}}

\title{Geometric Algorithms for  $k$-NN Poisoning}

\author{
    Diego Ihara Centurion\thanks{Department of Computer Science, University of Illinois at Chicago, \tt{\{dihara2, kchuba2, bfan4, fsgher2, tsivap2, sidiropo, astrai3\}@uic.edu}}
    \and
    Karine Chubarian\footnotemark[\value{footnote}]
    \and
    Bohan Fan\footnotemark[\value{footnote}]
    \and
    Francesco Sgherzi\footnotemark[\value{footnote}]
    \and
    Thiruvenkadam S Radhakrishnan\footnotemark[\value{footnote}]
    \and
    Anastasios Sidiropoulos\footnotemark[\value{footnote}]
    \and
    Angelo Straight\footnotemark[\value{footnote}]
}

\date{June 2023}

\index{Ihara, Diego}
\index{Chubarian, Karine}

\pgfplotsset{compat=1.18} 

\begin{document}

\maketitle

\begin{abstract}
We propose a label poisoning attack on geometric data sets against  $k$-nearest neighbor classification. We provide an algorithm that can compute an $\eps n$-additive approximation of the optimal poisoning in $n\cdot 2^{2^{O(d+k/\eps)}}$ time for a given data set $X \in \mathbb{R}^d$, where $|X| = n$. Our algorithm achieves its objectives through the application of multi-scale random partitions.  
\end{abstract}

\section{Introduction}
Recent developments in machine learning have spiked the interest in robustness, leading to several results in \textit{adversarial machine learning}~\cite{BaiL0WW21, Biggio2018WildPT, huang2011adversarial}.
A central goal in this area is the design of algorithms that are able to impair the performance of traditional learning methods by adversarially perturbing the input ~\cite{Carlini21, PitropakisPGAL19, SadeghiBG20}. 
Adversarial attacks can be exploratory, such as evasion attacks,
or causative, poisoning the training data to affect the performance of a machine learning algorithm or attack the algorithm itself. 
Backdoor poisoning is a type of causative adversarial attack, in which the attacker has access to the whole or a portion of the training data that they can perturb. 
Clean-label poisoning attacks are a type of backdoor poisoning attack that perturb only the features of the training data leaving the labels untouched, so as to make the poison less detectable. In the other end of the spectrum are label poisoning attacks that perturb or flip the training data labels. 

\paragraph{Why compute provably nearly-optimal poison attacks?}
A limitation with current poisoning methods is that it is not possible to adversarially perturb an input so that the performance of \emph{any} algorithm is negatively affected.
Moreover, it is generally not clear how to provably compare different poisoning methods.
We seek to address these limitations of adversarial machine learning research using tools from computational geometry.

Specifically, we study the following optimization problem: Given some data set, $X$, compute a small perturbation of $X$, so that the performance of a specific classifier deteriorates as much as possible.
An efficient solution to this optimal poisoning problem can be used to compare the performance of different classification algorithms, as follows.
Suppose we want to compare the performance of a collection of classification algorithms, $A_1, ..., A_t$, on some fixed data set $X$, in the presence of a poisoning attack that produces a bounded perturbation, $X'$, of $X$.
Ideally, we would like to have provable worst-case guarantees on the robustness of $A_1,...,A_t$.
However, such results are often hard to prove rigorously, and thus many existing methods lack such guarantees.
Since the poisoned data set $X'$ is unknown, we cannot simply run $A_1$, ..., $A_t$ on $X'$ and compare the results.
Instead, our method allows us to compute from $X$ some poisoned data set, $X''$, which is provably a nearly-optimal poison against the specific classification task.

\subsection{Robustness of $k$-Nearest Neighbors}

We instantiate the above general optimization problem of computing nearly-optimal poison attacks to the specific task of $k$-nearest neighbor classification.
Nearest-neighbor based algorithms are naturally robust due to the presence of an inherent majority voting mechanism. 
In \cite{knnrobust2022}, they are used to provide individual and joint certifications for test predictions in the presence of data poisoning and backdoor attacks. 
In \cite{paudice2018label}, a defense algorithm is proposed using $k$-nearest neighbors against label-flipping attacks. 
However, computing provably nearly-optimal poisoning against such algorithms has not been studied prior to our work. 
We provide an approximation algorithm that computes an optimal label flipping poisoning attack against $k$-nearest neighbors that achieve provable guarantees.

\subsection{Our Results}
We design and analyze poisoning algorithms against $k$-nearest neighbor classification ($k$-NN) in the setting of binary label classification.
The $k$-NN classifier is arguably one of the most popular and well-studied methods used in machine learning and geometric data analysis
\cite{fix1989discriminatory}.
The classifier works as follows:
Given a set of labeled points, $X\subset \mathbb{R}^d$, and some unlabeled $p\in \mathbb{R}^d$, we can compute a label for $p$ by taking the most frequently occurring label in the multiset of labels of the $k$ nearest neighbors of $p$ in $X$.

We formulate the poisoning problem against $k$-NN as follows.
Given some set of points, $X$, with binary labels, and some $m\in \mathbb{N}$, the goal is to flip the labels of at most $m$ points, so that we maximize the number of points in $X$
for which their \emph{predicted} label differs from their true label.
We refer to the set of points with flipped labels as an $m$-poison and define the number of points for which their predicted label differs from the original label as the \emph{corruption}.
The following summarizes our results.

\begin{theorem}\label{thm:kNN}
On input $X\subset \mathbb{R}^d$, with $|X|=n$, and $m\in \mathbb{N}$,
Algorithm \AlgKNNFlip~computes a $m$-poison against $k$-NN,
with expected corruption $\OPT_m(X)-\eps n$, in time $n\cdot 2^{2^{O(d+k/\eps)}}$,
where $\OPT_m(X)$ denotes the maximum corruption of any $m$-poison.
\end{theorem}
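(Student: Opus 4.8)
The plan is to exploit the \emph{locality} of $k$-NN classification: whether a point $p\in X$ is corrupted under a given labeling depends only on the labels of its $k$ nearest neighbors, i.e.\ on the labels of the points inside the ball $B(p,r_k(p))$, where $r_k(p)$ is the distance from $p$ to its $k$-th nearest neighbor. Consequently, if we could partition $X$ into clusters so that every point's neighbor-ball lies entirely inside its own cluster, the total corruption would decompose as a sum of independent per-cluster corruptions, and a label flip inside one cluster could never affect a point in another. The whole argument therefore reduces to producing a partition in which (i) the expected number of points whose neighbor-ball is cut by the partition is at most $\eps n$, and (ii) each cluster is simple enough to be solved by brute force.

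First I would build a \textbf{multi-scale random partition} of $\R^d$ from standard padded (CKR-type) decompositions at all dyadic scales $\Delta = 2^i$. For a Euclidean (doubling) space the padding guarantee bounds the probability that a radius-$r$ ball is cut by $O(d\,r/\Delta)$. I would assign each point $p$ to the scale $\Delta(p) \approx d\, r_k(p)/\eps$, so that the probability $B(p,r_k(p))$ is cut at its own scale is at most $\eps$. Gluing the scale-indexed partitions into a single partition of $X$ and applying linearity of expectation, the number of \emph{bad} points (those whose neighbor-ball is cut) is at most $\eps n$. For every \emph{good} point the entire set of labels that can influence its prediction lies within its own cluster, so when restricted to good points the corruption is exactly additive over clusters and is invariant to out-of-cluster flips.

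Next I would bound the complexity of a single cluster. A point assigned to scale $\Delta$ has $r_k(p) = \Theta(\eps\Delta/d)$, hence fewer than $k$ other points within distance $\Theta(\eps\Delta/d)$ of it. Since all points of one cluster lie within diameter $\Delta$, a packing argument in $\R^d$ yields at most $(d/\eps)^{O(d)}$ mutually separated representatives in the cluster, each carrying fewer than $k$ native points; together with the (at most $k$ per point) neighbors whose labels actually influence these points, the number of points whose labels must be decided inside one cluster is at most $s = 2^{O(d+k/\eps)}$. For each cluster I would enumerate all $2^{s}$ flip subsets, recording for each the pair (flips used, corruption of good points in the cluster), at cost $2^{2^{O(d+k/\eps)}}$ per cluster; I would then recombine clusters under the global budget $m$ by a knapsack-style dynamic program over the per-cluster corruption-versus-budget profiles.

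To finish, I would compare against $\OPT_m(X)$. Fixing an optimal $m$-poison and restricting its flips cluster-by-cluster, every good point it corrupts is still corrupted by the in-cluster part alone, so the best cluster-separable solution loses at most the bad points; hence its corruption is at least $\OPT_m(X)$ minus the number of bad points. Since the dynamic program finds the best cluster-separable solution and the expected number of bad points is at most $\eps n$, taking expectations over the random partition gives expected corruption at least $\OPT_m(X) - \eps n$. I expect the main obstacle to be the \textbf{decomposition lemma} itself: one must tune the scale assignment and padding parameter so that the expected cut-loss is $\le \eps n$ while \emph{simultaneously} keeping every cluster small enough for the brute force, and then verify that localizing an arbitrary globally optimal poison to the clusters destroys no more corruption than the bad points already account for.
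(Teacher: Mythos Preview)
Your high-level plan coincides with the paper's: sample a multi-scale random partition, show that at most $\eps n$ points have their $k$-nearest-neighbor ball cut, brute-force each cluster, and stitch the per-cluster solutions together with a knapsack-style dynamic program. The localization step at the end (restrict an optimal poison to each cluster; good points are unaffected; lose only the bad points) is also exactly the paper's argument in Lemma~\ref{lem:corruption_kNN}.

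The gap is in the ``gluing'' step, and it is precisely where the $2^{O(k/\eps)}$ in the running time comes from. You take dyadic scales and set $\Delta(p)\approx d\,r_k(p)/\eps$; at $p$'s own scale the padding guarantee does make the probability that $B(p,r_k(p))$ is cut at most $\eps$. But once you glue the per-scale partitions into one partition (\`a la Lee--Naor), there is a second way for the ball to be cut: a neighbor $q$ may land on a \emph{different level} than $p$. Since $r_k$ is only $1$-Lipschitz (Lemma~\ref{lem:1-Lip}), for $\|p-q\|_2\le r_k(p)$ one merely has $r_k(q)\le 2\,r_k(p)$, so with scale ratio $C$ the level-mismatch probability is $\log_C 2=1/\log_2 C$, which is $\Theta(1)$ for dyadic $C=2$; union-bounding over the $k$ neighbors then gives nothing. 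The paper's fix (Lemma~\ref{lem:Lip_multiscale} instantiated in Lemma~\ref{lem:multiscale-euclidean}) is to take $C=2^{\Theta(k/\eps)}$ and rescale $r=\Theta(k\sqrt{d}/\eps)\cdot r_k$, driving the level-mismatch probability to $\le \eps/k$ per neighbor and hence $\le \eps$ per point. The price is that each cluster now has diameter $r_k(p)\cdot 2^{\Theta(k/\eps)}\cdot\mathrm{poly}(d,k,1/\eps)$, and the packing bound (Lemma~\ref{lem:cluster_size}) then yields $|C|\le k\cdot 2^{O(dk/\eps)}$ rather than your $k\cdot(d/\eps)^{O(d)}$; note that your own cluster-size computation is inconsistent with the $s=2^{O(d+k/\eps)}$ you write immediately after. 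In short, the exponential-in-$k/\eps$ diameter is not slack but is forced by the Lipschitz constant of $r_k$, and is exactly the ``tuning'' you correctly flag as the main obstacle.
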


While the above problem formulation only involves a fully labeled set $X$, typical tasks involve a training set $X_{train}$ and a  test set $X_{test}$. 
In order to address this case, we modify the algorithm in Theorem \ref{thm:kNN} so that it computes a poison of the training set, so that the prediction error on the test set deteriorates as much as possible.
This result is summarized in the following.

\begin{theorem}\label{thm:kNN2}
On input $X_{train}, X_{test}\subset \mathbb{R}^d$, with $|X_{train}|=n_{train}$,
$|X_{test}|=n_{test}$,
and $m\in \mathbb{N}$,
Algorithm \AlgKNNFlip'~computes a $m$-poison against $k$-NN,
with expected corruption $\OPT_m(X_{train}, X_{test})-\eps n_{test}$, in time $(n_{train}+n_{test})\cdot 2^{2^{O(d+k/\eps)}}$,
where $\OPT_m(X_{train}, X_{test})$ denotes the maximum corruption 
incurred on $X_{test}$
when all neighbors are chosen from $X_{train}$,
of any $m$-poison on $X_{train}$.
\end{theorem}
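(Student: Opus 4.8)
The plan is to adapt the algorithm and analysis of Theorem~\ref{thm:kNN} to the two-set setting by making explicit the separation between the points whose labels may be flipped---now restricted to $X_{train}$---and the points whose misclassification is counted---now restricted to $X_{test}$. First I would recall the engine behind \AlgKNNFlip: a multi-scale random partition of $\mathbb{R}^d$ decomposes the instance into local subproblems, each confined to a cell of bounded diameter, within which the poison is computed by exhaustive enumeration. For the modified algorithm I apply the same multi-scale random partition to the union $X_{train}\cup X_{test}$, but reassign the two roles according to the new objective: inside each cell a point of $X_{train}$ may have its label flipped (consuming budget), while a point of $X_{test}$ is counted as corrupted if, after the flips, the majority label among its $k$ nearest neighbors \emph{in $X_{train}$} disagrees with its true label.

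The structural fact I would establish is a decoupling lemma for test points. Call $p\in X_{test}$ \emph{cut} at the relevant scale if its $k$ nearest neighbors in $X_{train}$ do not all lie in the same cell as $p$. Since the prediction for $p$ depends only on the labels of these $k$ training points, any uncut test point has a decision determined entirely within its own cell, hence independent of the flips made elsewhere. Moreover---and this is the key point that makes the budget decompose---if a training point $q$ is among the $k$ nearest neighbors of an uncut test point $p$, then $q$ must itself lie in $p$'s cell; consequently every flip that can influence an uncut test point lies in that point's own cell, so the flips relevant to uncut test points partition cleanly across cells. By the padding guarantee of the multi-scale random partition, tuned so that a neighbor-ball of the relevant radius is split with probability at most $\eps$, the expected number of cut test points is at most $\eps n_{test}$.

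With the decoupling in hand I would solve the decomposed problem exactly and then reassemble under the global budget. Within one cell, only the training points lying in the $k$-NN sets of the uncut test points of that cell are relevant, and by a net/packing argument at the chosen scale their number is bounded by $2^{O(d+k/\eps)}$, inherited from the enumeration bound of Theorem~\ref{thm:kNN}; enumerating their flip configurations costs $2^{2^{O(d+k/\eps)}}$ per cell. For each cell $C$ and each local budget $0\le j\le m$ I would precompute the maximum number of uncut test points of $C$ corruptible using exactly $j$ flips, and then combine these per-cell profiles by a standard budget-allocation dynamic program to pick an allocation summing to at most $m$. This produces a feasible $m$-poison. To compare against the optimum, take an optimal $m$-poison and discard the flips that help only cut test points; the retained flips partition by cell by the decoupling above and corrupt at least $\OPT_m(X_{train},X_{test})$ minus the number of cut test points, a quantity the dynamic program matches cell by cell. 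Taking expectation over the partition gives expected corruption at least $\OPT_m(X_{train},X_{test})-\eps n_{test}$, and the running time is dominated, exactly as in Theorem~\ref{thm:kNN}, by the per-cell enumeration $2^{2^{O(d+k/\eps)}}$ summed over the $O(n_{train}+n_{test})$ cells.

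The step I expect to be the main obstacle is precisely the validity of the comparison against the optimum, which rests entirely on the decoupling claim---that for an uncut test point \emph{all} of its influential training neighbors sit in its own cell. Getting this to hold requires the padding to be applied at the scale matching each test point's $k$-th nearest \emph{training} neighbor rather than its nearest test-set neighbor, so the scale selection must be driven by training-set geometry; verifying that the multi-scale construction of Theorem~\ref{thm:kNN} still yields cut probability at most $\eps$ under this cross-set choice of radius is the one place where the argument does not transfer verbatim and must be re-examined.
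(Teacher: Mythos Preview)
Your proposal is correct and follows essentially the same route as the paper: partition using the multi-scale scheme with the scale function $\gamma$ taken to be the distance to the $k$-th nearest \emph{training} neighbor, enumerate flips of training points within each cell while scoring corruption on the test points in that cell, combine via dynamic programming over the budget, and charge the loss against $\OPT_m$ to the at most $\eps n_{test}$ expected test points whose $k$ training neighbors are split across cells. The only cosmetic difference is that you partition $X_{train}\cup X_{test}$ explicitly, whereas the paper nominally partitions $X_{train}$ and then assigns test points to cells; your phrasing is arguably cleaner, and your closing worry---that $\gamma$ must be training-based and still $1$-Lipschitz---is exactly the check the paper performs.
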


Algorithm \AlgKNNFlip'~is an adaptation of \AlgKNNFlip, and has a similar analysis.
Algorithm \AlgKNNFlip~uses as a subroutine a procedure for computing a random partition of a metric space.
The random partition has the property that the diameter of each cluster is upper bounded by some given Lipschitz function, while the probability of two points being separated is upper bounded by a multiple of their distance divided by the cluster diameter (see Section \ref{sec:random_partitions} for a formal definition).
This is inspired by the multi-scale random partitions invented by Lee and Naor \cite{lee2005extending} in the context of the Lipschitz extension problem.
We believe that our formulation could be of independent interest.

\subsection{Related Work}
To the best of our knowledge, no prior work tackles the adversarial poisoning of geometric algorithms giving provable  bounds.
The most traditional poisoning method is the \gls{fsgm} \cite{goodfellow2015}, which consists in adding, to each testing sample, noise with the same dimensionality of the input and proportional to the gradient of the cost function in that point.
This approach is proven to be the most damaging adversarial example against linear models like logistic regression.
However, it is less effective on deep neural networks, as they are able to approximate arbitrary functions \cite{HORNIK1989359}.
\gls{pgd} \cite{madry2019deep} improves upon \gls{fsgm} by iteratively looking for better adversarial examples for a given input toward the direction of the increase of the cost function. However, although producing \textit{better} adversarial samples, it still encounters the same drawbacks of \gls{fsgm}.

There are a few existing algorithms that perform label flipping attacks. In \cite{paudice2018label} they use a greedy algorithm to flip the examples that maximize the error on a validation set, when the classifier is trained on the poisoned dataset, and use k-NN to reassign the label in the training set as the defense against this type of label flipping attacks. \cite{labelcont2017} model the label attacks as a bilevel optimization problem targeting linear classifiers and also experiment with the transferability of these attacks.
Traditional poisoning methods, however, do not offer provable guarantees on the reduction in performance, thus yielding results that are not \textit{problem} dependent but rather \textit{implementation} or \textit{model} dependent~\cite{ChhabraRM20, Chhabra21, CINA2022108306}.

There have also been a few defenses proposed against label flipping attacks. In \cite{rosenfeld2020certified} they propose a pointwise certified defense against adversarially manipulated data up to some “radius of perturbation” through randomized smoothing. Specifically, each prediction is certified robust against a certain number of training label flips.

\subsection{Notation}
For any $k\in \mathbb{N}$, let $[k]=\{1,\ldots,k\}$.
Let $M=(X,\rho)$ be a metric space.
For any $X'\subseteq X$, we denote by $\diam_M(X')$ the diameter of $X'$, i.e.~$\diam_M(X')=\sup_{x,y\in X'} \rho(x,y)$; we also write $\diam(X')$ when $M$ is clear from the context.
For any $x\in X$, $Y\subseteq X$, we write $\delta(x,Y)=\inf_{y\in Y} \rho(x,y)$.
For any metric space $M=(X,\rho)$, any finite $Y\subset X$, any $i\in \mathbb{N}$, and any $q\in X$, let $\NN_i(q,Y)$ denote the $i$-th nearest neighbor of $q$ in $Y$.

\subsection{Organization}
\label{sec:notation}

The rest of the paper is organized as follows.
Section \ref{sec:random_partitions} presents our result on random partitions of metric spaces.
Section \ref{sec:kNN} presents our algorithm for poisoning against $k$-NN classifiers.
We conclude and highlight some open problems in Section \ref{sec:conclusions}.

\section{Random partitions of metric spaces}
\label{sec:random_partitions}

\begin{figure*}[ht]
    \centering
    \input{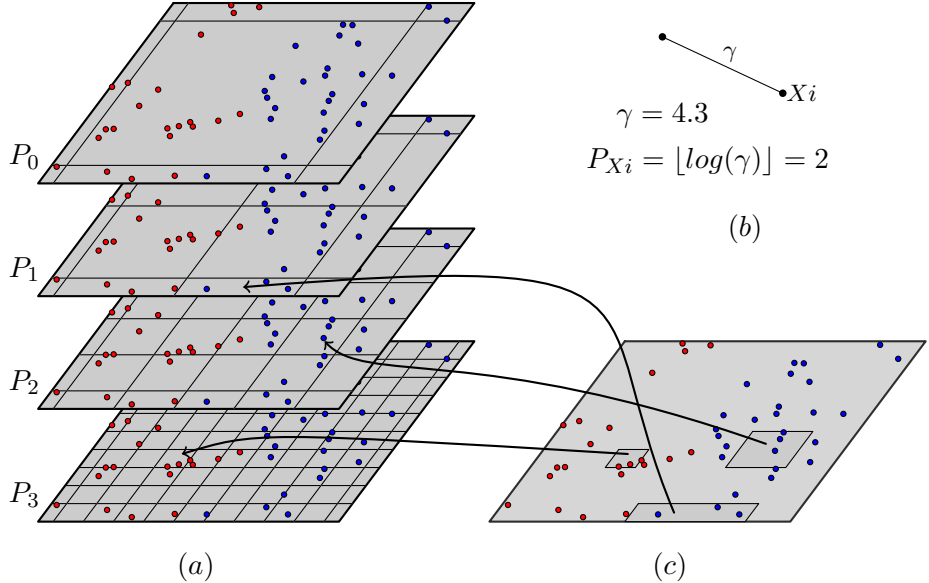}
    \caption{Illustration of the application of the multi-scale random partition approach to a set of points. (a) We begin with a random partition and refine to produce an additional level. (b) The selection of the level depends on the distance to the $k$-th neighbor. (c) The resulting partition is the union of cells originating at different levels of granularity.}
    \label{fig:multiscale_part}
\end{figure*}

In this Section, we introduce a random metric space partitioning scheme. The main idea is to partition a given metric space so that the radius of each cluster is bounded by some Lipschitz function, while ensuring that only a small fraction of pairs end up in different clusters, in expectation.
This partition is used by our algorithm for partitioning the problem into several sub-problems for each cluster.

For any partition $P$ of a ground set $Y$, and for any $y\in Y$, we denote by $P(y)$ the unique cluster in $P$ that contains $y$.
Let $M=(X,\rho)$ be some metric space.
Let $P$ be a random partition of $M$.
For any $\Delta>0$, 
we say that $P$ is $\Delta$-bounded if, with probability 1, for all clusters $C\in P$, we have $\diam(C)\leq \Delta$.
For any $\beta>0$, we say that $P$ is $\beta$-Lipschitz if for all
$x,y\in X$,
\[
\Pr[P(x)\neq P(y)] \leq \beta \frac{\rho(x,y)}{\Delta}.
\]
The infimum $\beta>0$ such that for all $\Delta>0$, $M$ admits a $\Delta$-bounded $\beta$-Lipschitz random partition, is referred to as the \emph{modulus of decomposability} of $M$, and is denoted by $\beta_M$.

\begin{lemma}[\cite{charikar1998approximating}]
\label{lem:mod_Rd}
For any $d\in \mathbb{N}$, and for subset of $d$-dimensional Euclidean space, $M$, we have that $\beta_M = O(\sqrt{d})$.
\end{lemma}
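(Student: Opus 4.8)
The plan is to exhibit, for every $\Delta>0$, a $\Delta$-bounded random partition of $M$ that separates any pair $x,y$ with probability $O(\sqrt d)\,\rho(x,y)/\Delta$; since $\beta_M$ is the infimum over such partitions, this yields $\beta_M=O(\sqrt d)$. First I would reduce to the case $M=\R^d$: any random partition of $\R^d$ restricts to one of $M\subseteq\R^d$, the diameters of the cells only shrink under restriction, and for $x,y\in M$ the separation probability is unchanged. I would then use a Calinescu--Karloff--Rabani style random ball partition. Fix a radius $R=\Delta/2$ (a single scale suffices here), let $N$ be a $\delta$-net of $\R^d$ for some tiny $\delta<R$, draw a uniformly random permutation $\pi$ of $N$, and assign each point $z$ to the center $\pi(i)$ with the smallest index $i$ such that $\|z-\pi(i)\|\le R$. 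Every cell is contained in a ball of radius $R$, so $\diam\le 2R=\Delta$ and the partition is $\Delta$-bounded with probability $1$.

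For the Lipschitz bound, fix $x,y$ and write $\rho=\rho(x,y)$. The key structural observation is that $x$ and $y$ land in different cells if and only if the center appearing \emph{first} in $\pi$ among all centers within distance $R$ of at least one of them in fact lies within $R$ of exactly one of them; a center covering both cannot cause a split. Since $\pi$ is uniform, the first such center is uniform over the centers in the union $B(x,R)\cup B(y,R)$, so the split probability equals the fraction of net points lying in the symmetric difference $B(x,R)\triangle B(y,R)$ among those in the union. Letting the net scale $\delta\to 0$, these counts converge to Lebesgue volumes, so
\[
\Pr[P(x)\neq P(y)]\;=\;\frac{\Vol\big(B(x,R)\triangle B(y,R)\big)}{\Vol\big(B(x,R)\cup B(y,R)\big)}.
\]

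The crux is a clean geometric estimate of this ratio. Writing $\omega_j$ for the volume of the unit $j$-ball and integrating over hyperplanes orthogonal to $x-y$, the intersection of the two balls has volume $2\int_{\rho/2}^{R}\omega_{d-1}(R^2-t^2)^{(d-1)/2}\,dt$, so each lune $B(x,R)\setminus B(y,R)$ has volume $2\int_{0}^{\rho/2}\omega_{d-1}(R^2-t^2)^{(d-1)/2}\,dt\le \rho\,\omega_{d-1}R^{d-1}$. Hence the numerator is at most $2\rho\,\omega_{d-1}R^{d-1}$ while the denominator is at least $\Vol(B(R))=\omega_d R^{d}$, giving
\[
\Pr[P(x)\neq P(y)]\;\le\;\frac{2\rho}{R}\cdot\frac{\omega_{d-1}}{\omega_{d}}.
\]
Finally I would invoke the Stirling estimate $\omega_{d-1}/\omega_d=\Gamma(\tfrac d2+1)/(\sqrt\pi\,\Gamma(\tfrac{d+1}2))=\Theta(\sqrt d)$; combined with $R=\Delta/2$ this yields $\Pr[P(x)\neq P(y)]=O(\sqrt d)\,\rho/\Delta$, as required.

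I expect the single genuine obstacle to be the volume computation, specifically making the $\sqrt d$ rather than $d$ appear. The naive bound that replaces the symmetric difference by the full annulus $\{R-\rho<\|\cdot-x\|\le R\}$ loses a factor of $\sqrt d$, since most of that annulus lies in the intersection of the two balls; one must instead evaluate the thin lune exactly via the slab integral above, where the ratio of an equatorial $(d-1)$-dimensional cross-section to the $d$-dimensional ball volume produces precisely $\omega_{d-1}/\omega_d=\Theta(\sqrt d)$. A secondary, routine point is justifying the passage from net-point counts to volumes as $\delta\to 0$, which follows because the net is asymptotically equidistributed and the regions involved have piecewise-smooth boundaries.
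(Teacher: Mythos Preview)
The paper does not prove this lemma; it is stated with a citation to \cite{charikar1998approximating} and used as a black box. So there is no ``paper's proof'' to compare against, and any correct argument you supply is an addition rather than a reconstruction.

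Your strategy is sound and the heart of it---the lune volume computation---is exactly right. Writing the lune as $2\int_0^{\rho/2}\omega_{d-1}(R^2-t^2)^{(d-1)/2}\,dt\le \rho\,\omega_{d-1}R^{d-1}$ and dividing by $\omega_d R^d$ cleanly produces the ratio $\omega_{d-1}/\omega_d=\Theta(\sqrt d)$, which is the whole point. Your diagnosis that the naive annulus bound loses a $\sqrt d$ is also correct.

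There is, however, a genuine technical gap in the discretization step, and it is not as routine as you suggest. First, a uniformly random permutation of a $\delta$-net of all of $\R^d$ is undefined; you must restrict to a bounded region, which is fine here since only centers within $R$ of $M$ matter and in the paper $M=X$ is finite. More importantly, for any \emph{fixed} $\delta>0$ the ratio $|N\cap\triangle|/|N\cap\cup|$ is not bounded by $(1+o(1))\cdot\Vol(\triangle)/\Vol(\cup)$ uniformly in $x,y$: the boundary error is additive of order $\delta/R$, so for pairs with $\rho(x,y)\ll\delta$ the bound $\beta\,\rho/\Delta$ can fail. You cannot ``let $\delta\to 0$'' because you need one random partition, not a limit of them. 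Two clean fixes: (i) if $M$ is finite, choose $\delta$ small relative to the minimum pairwise distance in $M$, absorbing the additive error into the constant; or (ii) replace the deterministic net by a homogeneous Poisson process with i.i.d.\ uniform marks---then, conditioned on $B(x,R)\cup B(y,R)$ being nonempty, the minimum-mark center is \emph{exactly} uniform on the union, so the split probability equals the volume ratio with no limiting argument, and the empty event has probability $e^{-\lambda\omega_d R^d}$, handled by a backup partition at negligible cost.
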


The following uses ideas from \cite {lee2005extending} and \cite{lee2009geometry}.

\begin{lemma}[Multi-scale random partition]
\label{lem:Lip_multiscale}
Let $C>0$.
Let $M=(X,\rho)$ be a metric space, and let $r:X \to \mathbb{R}_{\geq 0}$.
Then there exists a random partition $P$ of $M$, satisfying the following conditions:
\begin{description}
\item{(1)}
The following statement holds with probability 1:
For any $p\in X$,
\[
\diam(P(p)) \leq r(p) C^2 %
\]

\item{(2)}
For any $p,q \in X$, 
\[
\Pr_P[P(p)\neq P(q)] \leq \left(\frac{2 \|r\|_{\Lip}}{\log C} + \beta_M\right) \frac{\rho(p,q)}{r(p)} %
\]
\end{description}
Moreover, given $M$ as input, the random partition, $P$ can be sampled in time polynomial in $|X|$.
\end{lemma}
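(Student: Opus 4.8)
The plan is to build $P$ by combining, across a geometric sequence of scales, the single-scale bounded Lipschitz partitions guaranteed by the definition of $\beta_M$, and to select the operative scale at each point according to $r$ via a single shared random shift. Concretely, set $\Delta_i = C^i$ for $i\in\mathbb{Z}$ and construct a \emph{laminar} family of random partitions $\{Q_i\}_i$ in which $Q_i$ is $\Delta_i$-bounded, $Q_{i-1}$ refines $Q_i$, and each $Q_i$ is essentially $\beta_M$-Lipschitz: starting from a coarse $\Delta_{i_{\max}}$-bounded $\beta_M$-Lipschitz partition (which exists by the definition of $\beta_M$ and is poly-time computable in $\mathbb{R}^d$ by Lemma \ref{lem:mod_Rd}), refine each cluster independently with a fresh $\Delta_i$-bounded $\beta_M$-Lipschitz partition to obtain $Q_i$ from $Q_{i+1}$. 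Then draw a single random shift $\alpha$ (uniform in a multiplicative window of width $C$) and assign each $p$ the level $\ell(p)$ that rounds $\log_C r(p)$ using $\alpha$; let $\nu(p)=Q_{\ell(p)}(p)$ be the cluster of $p$ at its own level. Because the family is laminar, the nodes $\{\nu(p)\}_{p\in X}$ are pairwise nested-or-disjoint, so their inclusion-maximal elements are pairwise disjoint and cover $X$, and I let $P$ be exactly this set of maximal cells. Only $O(n)$ scales can produce distinct merges, so $P$ is sampled in time polynomial in $|X|$.

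For condition (1), each cell of $P$ is some $\nu(p)=Q_{\ell(p)}(p)$, hence has diameter at most $\Delta_{\ell(p)}=C^{\ell(p)}$. Since the level rounds $\log_C r(p)$ within the width-$C$ window, $C^{\ell(p)}\le r(p)\,C$ for the point $p$ defining the cell; a point $x$ landing in a strictly coarser maximal cell lies within distance $C^{\ell(p)}$ of that cell's defining point, and the Lipschitz control $|r(x)-r(p)|\le \|r\|_{\Lip}\rho(x,p)$ lets one absorb the resulting drift into one extra factor of $C$, giving $\diam(P(x))\le r(x)\,C^2$. (Points with $r(p)=0$ are forced into singletons.)

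For condition (2), fix $p,q$ and condition on the shift, splitting on whether $\ell(p)=\ell(q)$. When the levels agree, $p$ and $q$ share a cell of $P$ unless the laminar family already separates them at or below that level; summing the single-scale $\beta_M$-Lipschitz bounds over the geometric tail $\sum_{j\ge \ell(p)}\beta_M\,\rho(p,q)/\Delta_j$ telescopes to $O(\beta_M)\,\rho(p,q)/r(p)$, with the geometric-series constant absorbed into the slack. When the levels differ, I bound the separation probability by $1$ and charge it to $\Pr[\ell(p)\ne\ell(q)]$; since a single shared shift rounds $\log_C r(p)$ and $\log_C r(q)$, this probability is at most $|\log_C r(p)-\log_C r(q)|=|\ln r(p)-\ln r(q)|/\ln C$, and $|\ln r(p)-\ln r(q)|\le \|r\|_{\Lip}\rho(p,q)/\min(r(p),r(q))$ yields the term $\tfrac{2\|r\|_{\Lip}}{\log C}\cdot\tfrac{\rho(p,q)}{r(p)}$, the factor $2$ covering $\min(r(p),r(q))\ge r(p)/2$ in the nontrivial regime $\rho(p,q)\lesssim r(p)$ (outside it the claimed bound exceeds $1$ and holds trivially). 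Adding the two contributions gives the stated bound.

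I expect the main difficulty to be the \emph{gluing}: verifying that the maximal-cell rule yields a genuine partition with the correct per-point diameter despite points being pulled into coarser neighboring cells, and simultaneously keeping the same-level separation constant at $\beta_M$ rather than an inflated multiple once the laminar refinement and the geometric-tail sum are accounted for. The randomized level assignment (the source of the $1/\log C$ gain) and the boundedness are comparatively routine; the delicate part is arranging the hierarchy and the level rounding so that both conditions hold simultaneously with the clean constants claimed.
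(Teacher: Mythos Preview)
Your laminar/maximal-cell construction introduces a genuine gap in condition~(1). If a point $x$ is swallowed by a maximal cell $\nu(p)=Q_{\ell(p)}(p)$ with $\ell(p)>\ell(x)$, then $\diam(P(x))\le C^{\ell(p)}$ and you need $C^{\ell(p)}\le r(x)\,C^2$. Your Lipschitz-absorption step yields only $r(p)\le r(x)+\|r\|_{\Lip}\,\rho(x,p)\le r(x)+\|r\|_{\Lip}\,C^{\ell(p)}$, which can be rearranged into a useful bound on $r(p)$ in terms of $r(x)$ only when $\|r\|_{\Lip}\,C<1$ (or some comparable smallness condition). The lemma makes no such assumption, and in the intended application (Lemma~\ref{lem:multiscale-euclidean}) one takes $\|r\|_{\Lip}=2k\beta_M/\eps$ and $C=2^{4k/\eps}$, so $\|r\|_{\Lip}\,C$ is enormous. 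Concretely: if $a,b\in X$ are close with $r(a)$ huge and $r(b)$ tiny, then $\nu(a)$ swallows $b$ and $\diam(P(b))$ is on the scale of $r(a)$, not $r(b)\,C^2$. Your geometric-tail sum in the part~(2) analysis also inflates the constant from $\beta_M$ to $\beta_M\cdot C/(C-1)$, though that is minor by comparison.

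The paper sidesteps all of this by dropping both the nesting and the maximal-cell rule. It samples \emph{independent} $C^i$-bounded $\beta_M$-Lipschitz partitions $P_i$ (one per scale, not laminar), draws $\alpha\in[0,1]$ uniformly, sets $\ell(x)=\lceil\alpha+\log_C r(x)\rceil$, and simply declares $x\sim y$ iff $\ell(x)=\ell(y)$ and $P_{\ell(x)}(x)=P_{\ell(x)}(y)$. Then $P(x)\subseteq P_{\ell(x)}(x)$ automatically, giving $\diam(P(x))\le C^{\ell(x)}\le r(x)\,C^2$ for \emph{every} $x$ with no gluing to manage. For condition~(2) one union-bounds $\Pr[\ell(p)\neq\ell(q)]$ (exactly your random-shift computation) with the single-scale term $\Pr[P_t(p)\neq P_t(q)]\le\beta_M\,\rho(p,q)/C^t\le\beta_M\,\rho(p,q)/r(p)$ at the common level $t$; there is no tail to sum, so the constant is exactly $\beta_M$. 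Your difficulty-forecast was accurate---the gluing is where your scheme breaks---but the fix is not to glue more carefully; it is to avoid needing to glue at all.
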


\begin{proof}
Let $B=\|r\|_{\Lip}$

For any $i\in \mathbb{Z}$, let $P_i$ be a $C^i$-bounded $\beta_M$-Lipschitz random partition of $M$.
Thus, each point $x\in X$ is mapped to some cluster in each $P_i$.
We construct $P$ by assigning each $x\in X$ to a single one of these clusters.
We first sample $\alpha\in [0,1]$, uniformly at random.
Then, for each $x\in X$, we assign $x$ to the cluster $P_{\lceil \alpha + \log_{C} (r(x)) \rceil}(x)$.
This concludes the construction of $P$.
It remains to show that the assertion is satisfied.

For any $p\in X$, we have that $P(p) \subseteq P_i(p)$, where $i=\lceil \alpha + \log_{C} (r(x)) \rceil \leq 2+\log_{C}(r(x))$.
Since $P_i$ is $C^i$-bounded, it follows that 
\begin{align*}
\diam(P(p)) &\leq \diam(P_i(p)) & \text{($P(p)\subseteq P_i(p)$)} \\
 &\leq C^i & \text{($P_i$ is $C^i$-bounded)} \\
 &\leq C^{2+\log_{C}(r(p))} \\
 &= r(p) C^{2},
\end{align*}
with probability 1,
which establishes part (1) of the assertion.

It remains to establish part (2).
Let $p,q\in X$.
We may assume, without loss of generality, that $0<r(p)\leq r(q)$.
Let ${\cal E}_1$ be the event that $\lceil \alpha + \log_{C}(r(p))\rceil \neq \lceil \alpha + \log_{C}(r(q)) \rceil$.
We have
\begin{align}
\Pr[{\cal E}_1] &= \Pr[\lceil \alpha + \log_{C}(r(p)) \rceil \neq \lceil \alpha + \log_{C}(r(q)) \rceil] \notag \\
 &\leq |\log_{C}(r(p)) - \log_{C}(r(q))| \notag \\
 &= \log_{C}\frac{r(q)}{r(p)} \notag \\
 &= \left(\log\frac{r(q)}{r(p)} \right) / \left(\log C\right) \notag \\
 &\leq (1/\log C) \log\frac{r(p)+B\cdot \rho(p,q)}{r(p)} \qquad\;\; \text{($\|r\|_{\Lip}=B$)} \notag \\
 &= (1/\log C) \log\left(1+ \frac{B\cdot \rho(p,q)}{r(p)}\right) \notag \\
 &\leq (1/\log C) 2B\frac{\rho(p,q)}{r(p)}. \label{eq:multi_E1}
\end{align}
Conditioned on $\neg{\cal E}_1$, there exists $t\in \mathbb{Z}$, such that 
$t=\lceil \alpha+\log_{C}(r(p))\rceil = \lceil \alpha+\log_{C}(r(q))\rceil$;
let ${\cal E}_2$ be the event that $P_t(p)\neq P_t(q)$.
Since $P_t$ is $C^t$-bounded $\beta_M$-Lipschitz, it follows that
\begin{align}
\Pr[{\cal E}_2] \leq \beta_M \frac{\rho(p,q)}{C^t}
\leq
\beta_M \frac{\rho(p,q)}{C^{\log_{C}(r(p))}}
= \beta_M \frac{\rho(p,q)}{r(p)}. \label{eq:multi_E2}
\end{align}
By \eqref{eq:multi_E1} and \eqref{eq:multi_E2} we obtain that
\[
\Pr[P(p)\neq P(q)] \leq ((1/\log C)2B + \beta_M) \frac{\rho(p,q)}{r(p)}, 
\]
which establishes part (2) of the assertion, and concludes the proof.
\end{proof}

Figure \ref{fig:multiscale_part} illustrates the partitioning process.

\section{Poisoning $k$-NN}
\label{sec:kNN}
In this Section, we describe our poisoning algorithm, which is our main result.

Let $d\in \mathbb{N}$, and let $X\subset \mathbb{R}^d$. Let $\mylabel:X\to \{1,2\}$, and let $k\in \mathbb{N}$ be odd, with $k\leq n$. For any $p\in \mathbb{R}^d$, for any $i\in [k]$, let $\Gamma_i(p)$ be an $i$-th nearest neighbor of $p$, breaking ties arbitrarily, and let
\[
\gamma_i(p) = \|p-\Gamma_i(p)\|_2.
\]
We write $\gamma(p)=\gamma_k(p)$.

\begin{lemma}\label{lem:1-Lip}
The function $\gamma:\mathbb{R}^d\to \mathbb{R}$ is 1-Lipschitz.
\end{lemma}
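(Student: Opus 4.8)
The plan is to reduce the claim to a statement about balls, thereby avoiding any direct reasoning about \emph{which} point realizes the $k$-th nearest neighbor (which is only defined up to the arbitrary tie-breaking in the definition of $\Gamma_k$). First I would record the clean characterization
\[
\gamma(p) = \min\{ r \geq 0 : |X \cap B(p,r)| \geq k \},
\]
where $B(p,r)$ denotes the closed ball of radius $r$ centered at $p$. Indeed, ordering the points of $X$ by distance from $p$, the ball $B(p,\gamma(p))$ contains $\Gamma_1(p),\ldots,\Gamma_k(p)$ and hence at least $k$ points of $X$, while any strictly smaller ball excludes the $k$-th nearest neighbor and so contains at most $k-1$ points. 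This reformulation is insensitive to the tie-breaking rule, which is the main reason for introducing it.

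The heart of the argument is then a one-line containment. Fix $p,q\in\mathbb{R}^d$ and set $t=\|p-q\|_2$. By the triangle inequality, every point of $X$ within distance $\gamma(p)$ of $p$ lies within distance $\gamma(p)+t$ of $q$; that is, $B(p,\gamma(p))\subseteq B(q,\gamma(p)+t)$. Since the left-hand ball contains at least $k$ points of $X$, so does the right-hand ball, and the characterization above immediately yields
\[
\gamma(q) \leq \gamma(p) + t = \gamma(p) + \|p-q\|_2 .
\]

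Finally, swapping the roles of $p$ and $q$ gives the symmetric inequality $\gamma(p)\leq \gamma(q)+\|p-q\|_2$, and combining the two bounds produces $|\gamma(p)-\gamma(q)|\leq \|p-q\|_2$, which is exactly the assertion that $\gamma$ is $1$-Lipschitz. I do not anticipate a genuine obstacle here: the only step requiring care is justifying the ball characterization and confirming that it does not depend on how ties among equidistant points are broken; once that is in place, the triangle inequality does all the remaining work.
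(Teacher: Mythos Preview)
Your argument is correct and rests on the same idea as the paper's proof: the triangle inequality shows that the $k$ nearest neighbors of $q$ all lie within distance $\gamma(q)+\|p-q\|_2$ of $p$, forcing $\gamma(p)\le\gamma(q)+\|p-q\|_2$. The paper phrases this via a two-case split on the location of $\Gamma_k(p)$ (one case of which is in fact subsumed by the other), whereas your ball-count characterization of $\gamma$ avoids the case analysis and makes independence from the tie-breaking rule explicit, but the underlying mechanism is identical.
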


\begin{proof}
WLOG, let $\gamma(p)\ge \gamma(q)$. It is sufficient to prove that $\gamma(p)\le\norm{p-q}+\gamma(q)$, which means $\Gamma_k(p)$ is within distance $\norm{p-q}+\gamma(q)$ from $p$. 
Now consider two cases: 
\begin{description}
    \item{Case 1:} $\Gamma_k(p)$ falls in $\ball(q, \gamma(q))$. By triangle inequality, $\gamma(p)\leq \norm{p-q}+\norm{\Gamma_k(p)-q}\le\norm{p-q}+\gamma(q)$.
    
    \item{Case 2:} $\Gamma_k(p)$ falls outside of $\ball(q, \gamma(q))$.
    If $\gamma(p)>\norm{p-q}+\gamma(q)$, then all the $k$-nearest neighbour of $q$ are closer to $p$ than $\Gamma_k(p)$, which is a contradiction. 
\end{description}
\end{proof}

\begin{lemma}[Euclidean multi-scale random partition]
\label{lem:multiscale-euclidean}
Let $\eps>0$, there exists a random partition $P$ of $X$, satisfying the following conditions:
\begin{description}
    \item{(1)}
    The following statement holds with probability 1:
    For any $p\in X$, 
    \[
    \diam(P(p)) \leq \gamma(p)   2^{8k/\eps} O(\sqrt{d})
    \]
    
    \item{(2)}
    For any $p,q\in X$,
    \[
    \Pr[P(p) \neq P(q)] \leq  \frac{\eps \|p-q\|_2}{k \gamma(p)}.
    \]
\end{description}
Moreover, $P$ can be sampled in time polynomial in $|X|$.
\end{lemma}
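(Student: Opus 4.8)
The plan is to obtain this Euclidean statement as a direct instantiation of the Multi-scale random partition Lemma~\ref{lem:Lip_multiscale}. I would apply that lemma to the Euclidean metric $\rho(p,q)=\|p-q\|_2$ on $X$, taking the radius function to be a \emph{rescaled} copy of the $k$-th nearest neighbor distance, $r(p)=\lambda\,\gamma(p)$, for a scale factor $\lambda>0$ to be fixed. By Lemma~\ref{lem:1-Lip} the function $\gamma$ is $1$-Lipschitz, hence $\|r\|_{\Lip}=\lambda$, and I would invoke Lemma~\ref{lem:Lip_multiscale} with a parameter $C>1$ also to be chosen. The whole argument then reduces to selecting the two free parameters $C$ and $\lambda$ so that parts (1) and (2) of the target match.

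The observation driving the choice is that the two terms in the separation bound of Lemma~\ref{lem:Lip_multiscale}(2) behave differently under rescaling of $r$. With $r=\lambda\gamma$, that bound reads
\[
\Pr[P(p)\neq P(q)] \leq \left(\frac{2\lambda}{\log C}+\beta_M\right)\frac{\|p-q\|_2}{\lambda\,\gamma(p)} = \left(\frac{2}{\log C}+\frac{\beta_M}{\lambda}\right)\frac{\|p-q\|_2}{\gamma(p)}.
\]
The first term is scale-invariant and shrinks only as $C$ grows, whereas the additive modulus term becomes $\beta_M/\lambda$, which shrinks as we inflate $\lambda$. Since Lemma~\ref{lem:mod_Rd} only gives $\beta_M=O(\sqrt d)$ — a quantity far larger than the target coefficient $\eps/k$ — it is exactly this rescaling that lets us beat the additive term. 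I expect this to be the main conceptual obstacle: a naive application with $r=\gamma$ would leave an irreducible $\Omega(\sqrt d)$ factor in the separation probability, so the inflation of $r$ is the crux. I would then set $C=2^{4k/\eps}$, so that $\tfrac{2}{\log C}=\tfrac{\eps}{2k}$, and take $\lambda=\Theta(k\sqrt d/\eps)$ large enough that $\beta_M/\lambda\leq \tfrac{\eps}{2k}$; summing the two halves gives the coefficient $\eps/k$ of part (2).

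For part (1) I would feed the same parameters into the diameter guarantee of Lemma~\ref{lem:Lip_multiscale}(1):
\[
\diam(P(p)) \leq r(p)\,C^2 = \lambda\,C^2\,\gamma(p) = O\!\left(\frac{k\sqrt d}{\eps}\right)2^{8k/\eps}\,\gamma(p),
\]
using $C^2=2^{8k/\eps}$. The $\sqrt d$ surviving from $\lambda$ is precisely the $O(\sqrt d)$ claimed in the statement, and the leftover polynomial factor $O(k/\eps)$ is absorbed into the exponential $2^{8k/\eps}$ (this is harmless, as the exponent is in any case passed through an $O(\cdot)$ in the downstream Theorem~\ref{thm:kNN}). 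The polynomial sampling time is inherited verbatim from the final clause of Lemma~\ref{lem:Lip_multiscale}. The only remaining bookkeeping is to handle degenerate points with $\gamma(p)=0$ (e.g.\ when $k$ copies of a point coincide), which can be treated separately since both bounds are then trivial.
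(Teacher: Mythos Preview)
Your proposal is correct and is essentially identical to the paper's proof: the paper also applies Lemma~\ref{lem:Lip_multiscale} with $r=B\gamma$ for $B=2k\beta_M/\eps$ (your $\lambda$) and $C=2^{4k/\eps}$, using Lemma~\ref{lem:1-Lip} to get $\|r\|_{\Lip}=B$ and Lemma~\ref{lem:mod_Rd} for $\beta_M=O(\sqrt d)$, then substitutes. Your explicit remark that the leftover $O(k/\eps)$ factor in the diameter bound is absorbed into the exponential, and your side note on the degenerate case $\gamma(p)=0$, are minor elaborations the paper omits but do not change the argument.
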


\begin{proof}
Let $M=(X,\rho)$ be the metric space obtained by setting $\rho$ to be the Euclidean metric.
By Lemma \ref{lem:mod_Rd} we have
$\beta_M=O(\sqrt{d})$.
Let $P$ be the random partition of $X$ obtained by applying Lemma 
\ref{lem:Lip_multiscale},
setting $\gamma:X\to\mathbb{R}_{\geq 0}$ where 
$r = B \gamma$,
with 
$B = 2k \beta_M / \eps$,
and
$C = 2^{4k/\eps}$.
By Lemma \ref{lem:1-Lip} we have that $\|\gamma\|_{\Lip}=1$, and thus
$\|r\|_{\Lip}=\|B\gamma\|_{\Lip} = B\|\gamma\|_{\Lip}=B$.
The assertion now follows by straightforward substitution on Lemma \ref{lem:Lip_multiscale}.
\end{proof}

\begin{lemma}
\label{lem:cluster_size}
Let $h>0$, and 
let $A\subset \mathbb{R}^d$, such that for all $p\in A$, we have $\diam(A) \leq h\cdot \gamma(p)$.
Then, $|X\cap A| = k \cdot  h^{d+O(1)}$.
\end{lemma}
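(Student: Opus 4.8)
The plan is to combine a local sparsity estimate with a covering argument. Set $D=\diam(A)$ and $r=D/h$. The hypothesis says precisely that every $p\in A$ satisfies $\gamma(p)\geq D/h=r$; that is, the $k$-th nearest neighbor of each point of $A$ sits at distance at least $r$, so $X$ is \emph{locally sparse} inside $A$ at scale $r$. We may assume $r>0$, since otherwise $\diam(A)=0$, $A$ is a single point, and the claim holds trivially.

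First I would establish a local counting bound: any Euclidean ball $B$ of radius $r/3$ contains at most $k$ points of $X\cap A$. Indeed, suppose $p_1,\dots,p_m\in X\cap A\cap B$. Any two of them are at distance at most $2r/3<r\leq \gamma(p_1)$, so $p_1,\dots,p_m$ all lie strictly within distance $\gamma(p_1)=\gamma_k(p_1)$ of $p_1$ (this includes $p_1$ itself, at distance $0$). By definition of $\gamma_k(p_1)$ as the distance to the $k$-th nearest neighbor, the number of points of $X$ lying strictly closer to $p_1$ than $\gamma_k(p_1)$ is at most $k$, and hence $m\leq k$.

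Next I would cover $A$ by balls of radius $r/3$. Since $A$ has diameter $D=hr$, it is contained in a ball of radius $D$, and the standard volumetric packing estimate in $\mathbb{R}^d$ shows that $\left(O(D/r)\right)^d=\left(O(h)\right)^d=h^{d+O(1)}$ balls of radius $r/3$ suffice to cover it. Since each covering ball contributes at most $k$ points of $X\cap A$ by the local bound, we conclude
\[
|X\cap A|\leq k\cdot h^{d+O(1)},
\]
as claimed.

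The only delicate step is the local count. One must use a radius strictly below $r/2$ (here $r/3$) so that two distinct points of a covering ball are \emph{strictly} within distance $\gamma_k(p_1)$ of $p_1$; this rules out degenerate configurations where many points of $X$ sit on the sphere of radius exactly $\gamma_k(p_1)$, which would otherwise break the clean factor of $k$. Everything else reduces to the routine fact that a subset of $\mathbb{R}^d$ of diameter $D$ has covering number $\left(O(D/\text{scale})\right)^d$ at a given scale.
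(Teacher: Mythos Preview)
Your proof is correct and follows essentially the same approach as the paper: establish that each small ball contains at most $k$ points of $X\cap A$ via the definition of $\gamma_k$, then cover $A$ by $h^{d+O(1)}$ such balls. The paper works with balls of radius $r^*/2$ where $r^*=\inf_{p\in A}\gamma(p)$ (centered at points of $A$) rather than your $r/3$ with $r=\diam(A)/h$, but this is a cosmetic difference; your version, which bounds $|X\cap A\cap B|$ for an arbitrary ball $B$ by picking $p_1\in X\cap A\cap B$, is arguably cleaner since it sidesteps any concern about where the covering balls are centered.
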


\begin{proof}
For any $p\in \mathbb{R}^d$, we have that $\gamma(p)$ is the distance between $p$ and $k$-th nearest neighbor of $p$ in $X$.
It follows that the interior of $\ball(p, \gamma(p))$ contains at most $k$ points in $X$ (it contains at most $k-1$ points in $X$ if $p\notin X$).
In particular, the (closed) ball $\ball(p, \gamma(p)/2)$ contains at most $k$  points in $X$.
Let 
\[
r^* = \inf_{p\in A} \gamma(p).
\]
It follows that for all $p\in A$, 
\begin{align}
|X\cap \ball(p, r^*/2)| &\leq k. \label{eq:ball_intersection}
\end{align}

We have by the assumption that $\diam(A)\leq h\cdot r^*$, and thus
$A\subseteq \ball(p^*, R^*)$, for some $p^*\in A$, and some $R^*=2h\cdot r^*$.
For any $0<\alpha<\beta$, we have that any ball of radius $\beta$ in $\mathbb{R}^d$ can be covered by at most $O(\beta/\alpha)^d=(\beta/\alpha)^{d+O(1)}$ balls of radius $\alpha$.
Therefore, $A$ can be covered by a set of at most 
$(R^*/r^*)^{d+O(1)} = h^{d+O(1)}$
balls of radius $r^*/2$.
Combining with \eqref{eq:ball_intersection} it follows that
\[
|X\cap A| = k \cdot h^{d+O(1)},
\]
which concludes the proof.
\end{proof}

\subsection{The main poisoning algorithm}
\label{alg:knn_poison}
We are now ready to describe the main poisoning algorithm against $k$-NN.
For any finite $Y\subset \mathbb{R}^d$, and for any integer $i\geq 0$, let $\OPT_{i}(X)$ be the maximum corruption that can be achieved for $X$ with a poison set of size at most $i$. Let $corruption(X,Y)$ be the corruption of poisoning $X$ by flipping labels of point set $Y\subset X$. Now we describe our poisoning algorithm.

\textbf{Algorithm \AlgKNNFlip~for $k$-NN Poisoning:}
The input consists of $X\subset \mathbb{R}^d$, with $|X|=n$, and $\mylabel : X \to \{1,2\}$.
\begin{description}
    \item{\textbf{Step 1.}}
    Sample the random partition $P$ according to the algorithm in Lemma \ref{lem:multiscale-euclidean}.
    
    \item{\textbf{Step 2.}}
    For any cluster $C\subset X$ in $P$,
    by Lemma \ref{lem:cluster_size} 
    we have that $|C| = k\cdot (\sqrt{d} 2^{8k/\eps})^{d+O(1)} = k\cdot 2^{(d+O(1))8k/\eps}$.
    For any $i\in \{1,\ldots,m\}$, we compute an optimal poisoning, $S_{C,i}\subseteq C$, for $C$ with $i$ poison points via brute-force enumeration.
    Each solution can be uniquely determined by selecting the $i$ points for which we flip their label.
    Thus, the number of possible solutions is at most $2^{|C|} = 2^{k\cdot 2^{(d+O(1))8k/\eps}}$.
    The enumeration can thus be done in time $2^{k\cdot 2^{(d+O(1))8k/\eps}}$, for each cluster in $P$.
    Since there are at most $n$ clusters, the total time is $n\cdot 2^{k\cdot 2^{(d+O(1))8k/\eps}} = n\cdot 2^{2^{O(d+k/\eps)}}$.
    
    \item{\textbf{Step 3.}}
    We next combine the partial solutions computed in the previous step to obtain a solution for the whole pointset.
    This is done via dynamic programming, as follows.
    We order the clusters in $P$ arbitrarily, as $P=\{C_1,\ldots,C_{|P|}\}$.
    For any $i\in \{0,\ldots,|P|\}$, $j\in \{1,\ldots,m\}$, let 
    \[
    A_{i,j} = \OPT_{j}(C_1\cup \ldots \cup C_i).
    \]
    We can compute $A_{i,j}$ via dynamic programming using the formula

   \begin{equation*}
   \scriptsize
    \begin{split}
    A_{i,j} = \left\{\begin{array}{ll}
    \max\limits_{t\in [j]} \left( A_{i-1,t} + \corruption(C_i, S_{C_i,j-t}) \right) & \text{ if } i>0 \\
    0 & \text{ otherwise } 
    \end{array}\right.
    \end{split}
    \end{equation*}
    
    The size of the dynamic programming table is $O(|P|\cdot m) = O(nm)$.
    The same recursion can also be used to compute an optimal $k$-poison, $Y$, for $C_1\cup \ldots \cup C_{|P|}$.
    The algorithm terminates and outputs $Y$ as the final poison for $X$.
\end{description}   
\begin{proof}[Correctness of Dynamic Programming]
By definition, $A_{i,j}$ is the optimal (maximum) corruption with $j$ poison points on the first $i$ clusters $(C_1\cup \ldots \cup C_i)$. $A_{i,0}=0$ for all $i$. Suppose the solution $A_{i-1,j}$ is correct, then $A_{i,j}$ is the maximum of optimal corruption for poisoning the first $i-1$ clusters with $t$ points, plus the corruption using the remaining $j-t$ points on $i$-th cluster. 
\end{proof}

\begin{lemma}
\label{lem:corruption_kNN}
$E[\corruption(X,Y)] \geq \OPT_m(X) - \eps n$.
\end{lemma}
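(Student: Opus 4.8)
The plan is to relate the true global $k$-NN corruption of the algorithm's output $Y$ to the cluster-wise objective that the dynamic program actually optimizes, and then to charge the discrepancy between the two to those points whose $k$ nearest neighbors are separated by the random partition $P$. First I would fix a realization of $P$ and set up notation. Let $S^*\subseteq X$ be an optimal $m$-poison, so $\corruption(X,S^*)=\OPT_m(X)$. Call a point $p\in X$ \emph{cut} (by $P$) if $P(p)\neq P(\Gamma_i(p))$ for some $i\in[k]$, let $W\subseteq X$ be the set of cut points, and call every other point \emph{good}. For any poison $S$ write $f_P(S)=\sum_{C\in P}\corruption(C,S\cap C)$ for the cluster-wise corruption optimized in Step~3, where $\corruption(C,\cdot)$ evaluates the $k$-NN prediction of points of $C$ using only the points of $C$.

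The key structural observation is that for a good point $p$ all $k$ of its global nearest neighbors lie in its own cluster $P(p)$, so the $k$-NN prediction of $p$ computed within $P(p)$ coincides with its prediction computed in all of $X$; hence $p$ contributes identically to $f_P(S)$ and to $\corruption(X,S)$, for every $S$. Since the two objectives can therefore differ only on cut points, and each point contributes at most $1$ to either count, I obtain, for every realization of $P$ and every $S$,
\[
\corruption(X,S) \geq f_P(S) - |W|
\qquad\text{and}\qquad
f_P(S) \geq \corruption(X,S) - |W|.
\]
Applying the first inequality to $Y$, the optimality of the dynamic program (which gives $f_P(Y)\geq f_P(S^*)$), and then the second inequality to $S^*$, I would chain
\[
\corruption(X,Y) \;\geq\; f_P(Y)-|W| \;\geq\; f_P(S^*)-|W| \;\geq\; \OPT_m(X) - 2|W|.
\]

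It then remains to bound $E[|W|]$. By a union bound over the $k$ nearest neighbors together with part~(2) of Lemma~\ref{lem:multiscale-euclidean}, and using $\gamma_i(p)\leq \gamma_k(p)=\gamma(p)$ for every $i\leq k$ so that the per-neighbor bounds telescope,
\[
\Pr[p\in W] \;\leq\; \sum_{i=1}^{k}\Pr[P(p)\neq P(\Gamma_i(p))] \;\leq\; \sum_{i=1}^{k}\frac{\eps\,\gamma_i(p)}{k\,\gamma(p)} \;\leq\; \eps .
\]
Summing over $p\in X$ gives $E[|W|]\leq \eps n$, and taking expectations in the chained inequality yields $E[\corruption(X,Y)]\geq \OPT_m(X)-2\eps n$. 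To reach the stated $\OPT_m(X)-\eps n$ I would simply invoke Lemma~\ref{lem:multiscale-euclidean} with parameter $\eps/2$, which only changes the exponent $O(d+k/\eps)$ by a constant factor and so leaves the running time bound intact.

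The main obstacle is the structural comparison in the second paragraph: one must argue carefully that the cluster-wise and global $k$-NN predictions agree \emph{exactly} on good points, so that the entire gap between the dynamic program's objective $f_P$ and the true corruption is localized to $W$ (being mindful of tie-breaking and of clusters too small to contain $k$ points, where the cut-point contribution is simply bounded by $1$). Once this localization is in place, the optimality of the dynamic program and the expected-cut bound are routine, the only quantitative subtlety being the telescoping $\sum_{i=1}^k \gamma_i(p)\leq k\,\gamma(p)$ that converts the per-neighbor separation probability into the clean estimate $\Pr[p\in W]\leq\eps$.
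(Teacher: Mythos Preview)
Your argument is essentially the paper's own: define the cut set $W$ (the paper's $X'$), bound $\Pr[p\in W]\leq\eps$ via the union bound and Lemma~\ref{lem:multiscale-euclidean}, and compare the dynamic program's cluster-wise objective to the global corruption on the complement of $W$. You are in fact more careful than the paper in that you track the loss of $|W|$ on \emph{both} sides of the sandwich and then halve $\eps$, whereas the paper's chain writes only a single $|X'|$; your accounting is the honest one, and the halving is harmless for the stated running time.
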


\begin{proof}
Let $Z\subseteq X$ be an optimal $k$-poison for $X$.
Recall that $P$ is the random partition sampled in Step 1.

For any $x\in X$, $i\in \mathbb{N}$, let ${\cal E}_{x,i}$ be the event that the cluster of $P$ containing $x$, does not contain the $i$-nearest neighbors of $x$ in $X$; i.e.~$\NN_i(x,X)\notin P(x)$.
Let also ${\cal E}_x$ be the event that the cluster of $P$ containing $x$, does not contain all of the $k$-nearest neighbors of $x$ in $X$; i.e.
\[
{\cal E}_x={\cal E}_{x,1} \vee \ldots \vee {\cal E}_{x,k}.
\]
Thus
\begin{align}
\Pr[{\cal E}_x] &= \Pr[{\cal E}_{x,1} \vee \ldots \vee {\cal E}_{x,k}] \notag \\
 &\leq \sum_{i=1}^k \Pr[{\cal E}_{x,i}] & \text{(union bound)} \notag \\
 &= \sum_{i=1}^k \Pr[P(x)\neq P(\NN_i(x)] \notag \\
 &\leq \sum_{i=1}^k \frac{\eps \|x-\NN_i(x)\|_2}{k \gamma(p)} & \text{(Lemma \ref{lem:multiscale-euclidean})} \notag \\
 &\leq k \frac{\eps \|x-\NN_k(x)\|_2}{k \gamma(p)}  \notag \\
 &= \eps \label{eq:PrEx}
\end{align}
Let 
\[
X' = \{x\in X:\{\NN_1(x),\ldots,\NN_k(x)\}\not\subseteq P(x)\}.
\]
By \eqref{eq:PrEx} and the linearity of expectation, it follows that 
\begin{align}
E[|X'|] = \sum_{x\in |X|} Pr[{\cal E}_x] \leq \eps|X| = \eps n. \label{eq:EXprime}
\end{align}

Let $Y$ be the poison that the algorithm returns.
Note that $X\setminus X' = C_1\cup\ldots\cup C_{|P|}$.
For any $x\in X\setminus X'$, if $Y$ corrupts $x$ in $X\setminus X'$, then it must also corrupt $x$ in $X$ (since, by the definition of $X$, all $k$-nearest neighbors of $x$ are in $X\setminus X'$).
Thus, $\OPT_m(X)\geq \OPT_m(X\setminus X') \geq \OPT_m(X) - |X'|$, and moreover,
\begin{align*}
\corruption(X,Y) \geq &~ \corruption(X\setminus X', Y) \\
 = &~ \OPT_m(X\setminus X') \\&~ \text{(by the dynamic program)}\\
 \geq &~  \OPT_m(X) - |X'|.
\end{align*}
Combining with \eqref{eq:EXprime} and the linearity of expectation we get
$E[\corruption(X,Y)]\geq \OPT_m(X)-E[|X'|] \geq \OPT_m(X) -\eps n$, which concludes the proof.
\end{proof}

\begin{proof}[Proof of Theorem \ref{thm:kNN}]
The bound on the corruption follows by Lemma \ref{lem:corruption_kNN}.
The running time is dominated by Step 2, which takes time $n\cdot 2^{2^{O(d+k/\eps)}}$.
\end{proof}

\subsection{Poisoning $k$-NN: with train and test datasets}
In this section, we extend our poisoning algorithm to the case with train and test datasets. We provide proof of Theorem \ref{thm:kNN2} by extending the lemmas from the paper. 

\addtocounter{theorem}{-1}
\begin{theorem}%
On input $X_{train}, X_{test}\subset \mathbb{R}^d$, with $|X_{train}|=n_{train}$,
$|X_{test}|=n_{test}$,
and $m\in \mathbb{N}$,
Algorithm \AlgKNNFlip'~computes a $m$-poison against $k$-NN,
with expected corruption $\OPT_m(X_{train}, X_{test})-\eps n$, in time $(n_{train}+n_{test})\cdot 2^{2^{O(d+k/\eps)}}$,
where $\OPT_m(X_{train}, X_{test})$ denotes the maximum corruption 
incurred on $X_{test}$
when all neighbors are chosen from $X_{train}$,
of any $m$-poison on $X_{train}$.
\end{theorem}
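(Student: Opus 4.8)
The plan is to mirror the proof of Theorem~\ref{thm:kNN}, re-deriving each supporting lemma in the train/test setting, where labels may only be flipped on $X_{train}$ and corruption is counted on $X_{test}$. The one structural change is the definition of the scale function: for every point $x\in X_{train}\cup X_{test}$ (train or test), let $\gamma(x)$ denote the distance from $x$ to its $k$-th nearest neighbor \emph{in $X_{train}$}, since it is the training set that supplies the neighbors used for classification. First I would observe that Lemma~\ref{lem:1-Lip} goes through verbatim with this definition, as its proof only uses that the neighbor set is a fixed point set; hence $\gamma$ remains $1$-Lipschitz. Consequently Lemma~\ref{lem:multiscale-euclidean} applies unchanged to the metric space on $X_{train}\cup X_{test}$, yielding a random partition $P$ with $\diam(P(p))\le \gamma(p)\,2^{8k/\eps}O(\sqrt d)$ for all $p$, and $\Pr[P(p)\neq P(q)]\le \eps\|p-q\|_2/(k\gamma(p))$ for all $p,q$.

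Next I would bound the running time. Lemma~\ref{lem:cluster_size}, instantiated with the defining point set taken to be $X_{train}$, shows that for each cluster $C$ of $P$ we have $|C\cap X_{train}| = k\cdot(\sqrt d\,2^{8k/\eps})^{d+O(1)} = 2^{O(d+k/\eps)}$, where the hypothesis $\diam(C)\le h\gamma(p)$ is supplied by part~(1) of the partition lemma. The number of test points per cluster need not be bounded, but this is harmless: Algorithm \AlgKNNFlip' only flips training labels, so for each cluster it enumerates the at most $2^{|C\cap X_{train}|}\le 2^{2^{O(d+k/\eps)}}$ candidate poison subsets of $C\cap X_{train}$, and for each candidate evaluates the induced corruption on the test points $C\cap X_{test}$. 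Summing the per-cluster cost and using $\sum_C |C\cap X_{test}| = n_{test}$ together with $\sum_C|C\cap X_{train}| = n_{train}$ gives total time $(n_{train}+n_{test})\cdot 2^{2^{O(d+k/\eps)}}$. The same dynamic program as in Step~3 then combines the per-cluster partial solutions under a global budget of $m$ flips.

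For the corruption guarantee I would adapt Lemma~\ref{lem:corruption_kNN}. For each test point $q\in X_{test}$, let ${\cal E}_q$ be the event that some one of the $k$ nearest training neighbors of $q$ lands in a different cluster than $q$. By a union bound over these $k$ neighbors, part~(2) of the partition lemma, and the fact that $\|q-\NN_i(q,X_{train})\|_2\le \gamma(q)$ for each $i\le k$, I obtain $\Pr[{\cal E}_q]\le \sum_{i=1}^k \eps\|q-\NN_i(q,X_{train})\|_2/(k\gamma(q))\le \eps$. Setting $X'_{test}=\{q\in X_{test}: {\cal E}_q \text{ holds}\}$, linearity of expectation gives $E[|X'_{test}|]\le \eps n_{test}$. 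The decisive point is that every $q\in X_{test}\setminus X'_{test}$ has all of its $k$ training neighbors inside $P(q)$, so its predicted label under any training poison depends only on the flips made within $P(q)\cap X_{train}$; hence the per-cluster optima combined by the dynamic program realize the true corruption on $X_{test}\setminus X'_{test}$, and deleting the test points $X'_{test}$ lowers the optimum by at most $|X'_{test}|$. This yields $\corruption(X_{train},X_{test},Y)\ge \OPT_m(X_{train},X_{test})-|X'_{test}|$, and taking expectations gives the claimed bound $\OPT_m(X_{train},X_{test})-\eps n_{test}$.

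The main obstacle I anticipate is precisely this last decomposition: I must argue that scoring corruption cluster-by-cluster simultaneously upper-bounds $\OPT_m$ up to an additive $|X'_{test}|$ and lower-bounds the true corruption of the returned poison. The clean way to handle it is to have the dynamic program credit a test point only when all $k$ of its true training neighbors lie in its cluster (a condition the algorithm can check directly), so that the cluster-restricted corruption coincides exactly with the true corruption on $X_{test}\setminus X'_{test}$. The cluster-wise restriction of an optimal global poison then certifies $\OPT_m(X_{train},X_{test})-|X'_{test}|$ as a lower bound, avoiding any spurious double-counting of the ``leaking'' points in $X'_{test}$ and keeping the loss at a single $\eps n_{test}$ term.
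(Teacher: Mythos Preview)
Your proposal is correct and follows essentially the same route as the paper: redefine $\gamma$ with neighbors taken from $X_{train}$, invoke the multi-scale partition and cluster-size bound to get per-cluster brute force plus dynamic programming, and then adapt Lemma~\ref{lem:corruption_kNN} by running the union-bound argument over test points to get $E[|X'_{test}|]\le \eps n_{test}$. The only cosmetic difference is that you partition $X_{train}\cup X_{test}$ explicitly whereas the paper partitions $X_{train}$ and then speaks of test points ``falling within'' a cluster; your formulation is slightly cleaner, and your final paragraph about crediting only test points whose $k$ neighbors lie in-cluster is a welcome tightening of a step the paper leaves implicit.
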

\addtocounter{theorem}{1}

Lemmas \ref{lem:mod_Rd} and \ref{lem:Lip_multiscale} hold without any modifications.

We now define $\gamma_i(p)$ as follows:

For any $p \in \mathbb{R}^d$, for any $i \in [k]$, let $\Gamma_i(p)$ be the $i$-th nearest neighbor of $p$ in $X_{train}$, breaking ties arbitrarily, and let

\[
\gamma_i(p) = \|p-\Gamma_i(p)\|_2.
\]

We denote $\gamma_k(p)$ as $\gamma(p)$.

\begin{lemma}[Euclidean multi-scale random partition]
\label{lem:multiscale-euclidean-traintest}
Let $\eps>0$, there exists a random partition $P$ of $X_{train}$, satisfying the following conditions:
\begin{description}
    \item{(1)}
    The following statement holds with probability 1:
    For any $p\in X_{train}$, 
    \[
    \diam(P(p)) \leq \gamma(p)   2^{8k/\eps} O(\sqrt{d})
    \]
    
    \item{(2)}
    For any $p,q\in \mathbb{R}^d$,
    \[
    \Pr[P(p) \neq P(q)] \leq  \frac{\eps \|p-q\|_2}{k \gamma(p)}.
    \]
\end{description}
\end{lemma}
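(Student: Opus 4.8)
The plan is to reduce directly to Lemma~\ref{lem:Lip_multiscale}, reusing the parameter choices of Lemma~\ref{lem:multiscale-euclidean} and changing only the ground set and the domain on which $\gamma$ is evaluated. First I would record that Lemma~\ref{lem:1-Lip} continues to hold for the redefined $\gamma$: its proof only uses that the neighbor set (now $X_{train}$) is fixed and the triangle inequality, and it never assumes that the query points lie in the neighbor set. Consequently $\gamma:\mathbb{R}^d\to\mathbb{R}$, defined as the distance to the $k$-th nearest neighbor in $X_{train}$, is $1$-Lipschitz on all of $\mathbb{R}^d$, so that $\|r\|_{\Lip}=B$ for $r=B\gamma$.

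Next I would instantiate Lemma~\ref{lem:Lip_multiscale} with the Euclidean metric, $\beta_M=O(\sqrt{d})$ (Lemma~\ref{lem:mod_Rd}), $B=2k\beta_M/\eps$, and $C=2^{4k/\eps}$, exactly as in Lemma~\ref{lem:multiscale-euclidean}. The important point is that the $C^i$-bounded $\beta_M$-Lipschitz partitions $P_i$ underlying that construction are geometric partitions of $\mathbb{R}^d$ (the randomly shifted decompositions realizing Lemma~\ref{lem:mod_Rd}); hence each $P_i$, and therefore the multi-scale partition $P$ obtained by assigning $x$ to $P_{\lceil \alpha+\log_C r(x)\rceil}(x)$, assigns a cluster $P(p)$ to every $p\in\mathbb{R}^d$, not merely to points of $X_{train}$. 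Restricting $P$ to $X_{train}$ gives the desired partition, while $P(p)$ remains well defined for test points. Part (1) for $p\in X_{train}$ is then immediate from part (1) of Lemma~\ref{lem:Lip_multiscale} via the identical substitution $\diam(P(p))\le r(p)C^2=B\gamma(p)2^{8k/\eps}$, matching the bound $\gamma(p)2^{8k/\eps}O(\sqrt{d})$ as in Lemma~\ref{lem:multiscale-euclidean}.

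The main, and essentially only, new ingredient is part (2) for arbitrary $p,q\in\mathbb{R}^d$. Here I would emphasize that the proof of part (2) of Lemma~\ref{lem:Lip_multiscale} never uses membership of $p,q$ in any particular set: the bound on $\Pr[\mathcal{E}_1]$ uses only $\|r\|_{\Lip}=B$, which holds on all of $\mathbb{R}^d$ by the Lipschitz property of $\gamma$ established above, and the bound on $\Pr[\mathcal{E}_2]$ uses only that the base partition $P_t$ is $C^t$-bounded $\beta_M$-Lipschitz, a property that applies to every pair of ground-set points, including test points. Substituting the parameters exactly as in Lemma~\ref{lem:multiscale-euclidean} collapses the coefficient $(2B/\log C+\beta_M)$ over $r(p)$ to $\eps/(k\gamma(p))$, yielding $\Pr[P(p)\neq P(q)]\le \eps\|p-q\|_2/(k\gamma(p))$ for all $p,q\in\mathbb{R}^d$. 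The only points requiring care are the harmless edge case $\gamma(p)=0$ and the WLOG ordering $r(p)\le r(q)$, both handled exactly as in the base lemma; the polynomial-time sampling claim transfers verbatim since the relevant ground set $X_{train}\cup X_{test}$ is finite.
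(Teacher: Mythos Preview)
Your proposal is correct and follows essentially the same approach as the paper: instantiate Lemma~\ref{lem:Lip_multiscale} with the Euclidean metric, $\beta_M=O(\sqrt{d})$, $r=B\gamma$ with $B=2k\beta_M/\eps$, and $C=2^{4k/\eps}$, and substitute. If anything, you are more careful than the paper, which just writes ``straightforward substitution'' without spelling out why the underlying randomly shifted partitions of $\mathbb{R}^d$ make $P(p)$ well defined for arbitrary $p\in\mathbb{R}^d$ and why the separation bound in part~(2) therefore extends beyond $X_{train}$ to test points.
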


\begin{lemma}
\label{lem:cluster_size-traintest}
Let $h>0$, and 
let $A\subset \mathbb{R}^d$, such that for all $p\in A$, we have $\diam(A) \leq h\cdot \gamma(p)$.
Then, $|X_{train}\cap A| = k \cdot  h^{d+O(1)}$.
\end{lemma}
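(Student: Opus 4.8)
The plan is to prove this lemma by essentially replaying the proof of Lemma~\ref{lem:cluster_size}, with the only change being that the $k$-nearest-neighbor distance $\gamma(p)$ is now measured with respect to $X_{train}$ rather than with respect to the whole point set. The key observation is that the covering argument used in the original proof never depended on $A$ or its points belonging to a single ambient set; it depended only on two facts about balls, and both of these facts transfer verbatim.

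First I would record the local density bound. For any $p\in\mathbb{R}^d$, the value $\gamma(p)=\gamma_k(p)$ is by the new definition the distance from $p$ to its $k$-th nearest neighbor \emph{in $X_{train}$}. Hence the interior of $\ball(p,\gamma(p))$ contains at most $k$ points of $X_{train}$, and in particular the closed ball $\ball(p,\gamma(p)/2)$ contains at most $k$ points of $X_{train}$. Setting $r^* = \inf_{p\in A}\gamma(p)$, this gives, for every $p\in A$,
\[
|X_{train}\cap \ball(p,r^*/2)| \leq k.
\]
This is exactly inequality~\eqref{eq:ball_intersection}, now stated for $X_{train}$.

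Next I would run the identical covering step. By hypothesis $\diam(A)\leq h\cdot\gamma(p)$ for all $p\in A$, so $\diam(A)\leq h\cdot r^*$ and therefore $A\subseteq \ball(p^*,R^*)$ for some $p^*\in A$ and $R^* = 2h\cdot r^*$. Since any ball of radius $R^*$ in $\mathbb{R}^d$ can be covered by at most $(R^*/(r^*/2))^{d+O(1)} = (4h)^{d+O(1)} = h^{d+O(1)}$ balls of radius $r^*/2$, and each such small ball meets $X_{train}$ in at most $k$ points, I conclude
\[
|X_{train}\cap A| = k\cdot h^{d+O(1)},
\]
which is the claim.

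Honestly, there is no genuine obstacle here: the proof of Lemma~\ref{lem:cluster_size} is entirely geometric and treats $X$ purely as a finite set against which $\gamma$ is defined and whose intersection with $A$ is being counted. Replacing $X$ by $X_{train}$ throughout leaves every inequality intact, since the only property of $X$ that was used is precisely the one now guaranteed by the redefinition of $\gamma$ relative to $X_{train}$. The mild subtlety worth flagging is consistency of definitions: one must make sure that $\gamma$ appearing in the hypothesis $\diam(A)\leq h\cdot\gamma(p)$ is the \emph{same} $\gamma$ (the $X_{train}$ version) used in the density bound; with the redefinition stated just before the lemma this holds automatically. The whole argument is therefore a direct transcription.
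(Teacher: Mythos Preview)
Your proposal is correct and follows essentially the same argument as the paper's own proof: both record the local density bound $|X_{train}\cap\ball(p,r^*/2)|\leq k$ from the definition of $\gamma$ relative to $X_{train}$, then cover $A$ by $h^{d+O(1)}$ balls of radius $r^*/2$ to conclude. Your write-up is in fact slightly more careful about the covering ratio (using $R^*/(r^*/2)=4h$ rather than $R^*/r^*$), but this is absorbed into the $h^{d+O(1)}$ notation and the two proofs are otherwise identical.
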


\begin{proof}
For any $p\in \mathbb{R}^d$, we have that $\gamma(p)$ is the distance between $p$ and $k$-th nearest neighbor of $p$ in $X_{train}$.
It follows that the interior of $\ball(p, \gamma(p))$ contains at most $k$ points in $X_{train}$ (it contains at most $k-1$ points in $X_{train}$ if $p\notin X_{train}$).
In particular, the (closed) ball $\ball(p, \gamma(p)/2)$ contains at most $k$  points in $X_{train}$.
Let 
\[
r^* = \inf_{p\in A} \gamma(p).
\]
It follows that for all $p\in A$, 
\begin{align}
|X_{train}\cap \ball(p, r^*/2)| &\leq k. \label{eq:ball_intersection-traintest}
\end{align}

We have by the assumption that $\diam(A)\leq h\cdot r^*$, and thus
$A\subseteq \ball(p^*, R^*)$, for some $p^*\in A$, and some $R^*=2h\cdot r^*$.
For any $0<\alpha<\beta$, we have that any ball of radius $\beta$ in $\mathbb{R}^d$ can be covered by at most $O(\beta/\alpha)^d=(\beta/\alpha)^{d+O(1)}$ balls of radius $\alpha$.
Therefore, $A$ can be covered by a set of at most 
$(R^*/r^*)^{d+O(1)} = h^{d+O(1)}$
balls of radius $r^*/2$.
Combining with \eqref{eq:ball_intersection-traintest} it follows that
\[
|X_{train}\cap A| = k \cdot h^{d+O(1)},
\]
which concludes the proof.
\end{proof}

\begin{proof}
Let $M=(X,\rho)$ be the metric space obtained by setting $\rho$ to be the Euclidean metric.
By Lemma \ref{lem:mod_Rd} we have
$\beta_M=O(\sqrt{d})$.
Let $P$ be the random partition of $X_{train}$ obtained by applying Lemma 
\ref{lem:Lip_multiscale},
setting $\gamma:X\to\mathbb{R}_{\geq 0}$ where 
$r = B \gamma$,
with 
$B = 2k \beta_M / \eps$,
and
$C = 2^{4k/\eps}$.
By Lemma \ref{lem:1-Lip} we have that $\|\gamma\|_{\Lip}=1$, and thus
$\|r\|_{\Lip}=\|B\gamma\|_{\Lip} = B\|\gamma\|_{\Lip}=B$.
The assertion now follows by straightforward substitution on Lemma \ref{lem:Lip_multiscale}.
\end{proof}

\subsection{The poisoning algorithm with train and test datasets}
\label{alg:knn_poison_traintest}

In this section, we describe the poisoning algorithm that will be used to obtain an m-poison with the guarantees of Theorem \ref{thm:kNN2}. This follows the algorithm \ref{alg:knn_poison} with three major differences:

\begin{itemize}
    \item The $\gamma_i(p)$ function is only defined with respect to the points within $X_{train}$
    \item The random partition in Step 1 is only on $X_{train}$
    \item The corruption in Step 2 is measured only on $X_{test}$ for the test points that fall within the same cluster
\end{itemize}

 For any finite $Y\subset \mathbb{R}^d$, and any integer $i \geq 0$, let $\OPT_{i}(X_{train}, X_{test})$ be the maximum corruption that can be achieved for $X_{train}, X_{test}$ with a poison set size of at most $i$. Let $corruption(X_{train}, X_{test}, Y)$ be the corruption of $X_{test}$ by flipping the labels of $Y \subseteq X_{train}$.

\textbf{Algorithm \AlgKNNFlip~for $k$-NN Poisoning with Train-Test:}
The input consists of $X_{train}$ and $X_{test}$ $\subset \mathbb{R}^d$, with $|X_{train}|=n_{train}$, $|X_{test}|=n_{test}$ and a map $\mylabel : X \to \{1,2\}$ that maps $X_{train}$ and $X_{test}$ to their corresponding labels.
\begin{description}
    \item{\textbf{Step 1.}}
    Sample the random partition of $X_{train}$ - $P$ according to the algorithm in Lemma \ref{lem:multiscale-euclidean-traintest}.
    
    \item{\textbf{Step 2.}}
    For any cluster $C\subset X_{train}$ in $P$,
    by Lemma \ref{lem:cluster_size-traintest} 
    we have that $|C| = k\cdot (\sqrt{d} 2^{8k/\eps})^{d+O(1)} = k\cdot 2^{(d+O(1))8k/\eps}$.
    For any $i\in \{1,\ldots,m\}$, we compute an optimal poisoning, $S_{C,i}\subseteq C$, for $C$ with $i$ poison points via brute-force enumeration.
    Each solution can be uniquely determined by selecting the $i$ points for which we flip their label.
    Thus, the number of possible solutions is at most $2^{|C|} = 2^{k\cdot 2^{(d+O(1))8k/\eps}}$.
    The enumeration can thus be done in time $2^{k\cdot 2^{(d+O(1))8k/\eps}}$, for each cluster in $P_X$.
    For each possible solution, we also measure the corruption of the poisoning on the points in test set that fall within the same cluster, which takes $O(n_{test})$ time.
    Since there are at most $n_{train}$ clusters, the total time is $(n_{train} + n_{tst})\cdot 2^{k\cdot 2^{(d+O(1))8k/\eps}} = (n_{train} + n_{test})\cdot 2^{2^{O(d+k/\eps)}}$.
    
    \item{\textbf{Step 3.}}
    We next combine the partial solutions computed in the previous step to obtain a solution for the whole pointset.
    This is done via dynamic programming, as follows.
    We order the clusters in $P$ arbitrarily, as $P=\{C_1,\ldots,C_{|P|}\}$.
    For any $i\in \{0,\ldots,|P|\}$, $j\in \{1,\ldots,m\}$, let 
    \[
    A_{i,j} = \OPT_{j}(C_1\cup \ldots \cup C_i).
    \]
    We can compute $A_{i,j}$ via dynamic programming using the formula
   \begin{equation*}
   \scriptsize
    \begin{split}
    A_{i,j} = \left\{\begin{array}{ll}
    \max\limits_{t\in [j]} \left( A_{i-1,t} + \corruption(C_i, S_{C_i,j-t}) \right) & \text{ if } i>0 \\
    0 & \text{ otherwise } 
    \end{array}\right.
    \end{split}
    \end{equation*}
    
    The size of the dynamic programming table is $O(|P|\cdot m) = O(nm)$.
    The same recursion can also be used to compute an optimal $k$-poison, $Y$, for $C_1\cup \ldots \cup C_{|P|}$.
    The algorithm terminates and outputs $Y$ as the final poison for $X$.
\end{description}

\begin{lemma}
\label{lem:corruption_kNN_train_test}
$E[\corruption(X_{train},X_{test},Y)] \geq \OPT_m(X_{train}, X_{test}) - \eps n_{test}$.
\end{lemma}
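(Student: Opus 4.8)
The plan is to mirror the proof of Lemma \ref{lem:corruption_kNN}, adapting each step to the train--test setting, where corruption is counted on $X_{test}$ while all $k$ nearest neighbors are drawn from $X_{train}$. The structural fact I will lean on is that the random partition $P$, though defined on $X_{train}$, assigns a cluster $P(q)$ to every point $q\in\mathbb{R}^d$, and that the separation bound of part (2) of Lemma \ref{lem:multiscale-euclidean-traintest} is stated for all pairs $p,q\in\mathbb{R}^d$---in particular for a test point paired with one of its train-set neighbors.

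First I would define, for each $q\in X_{test}$ and each $i\in[k]$, the event ${\cal E}_{q,i}$ that $P(q)\neq P(\Gamma_i(q))$, and set ${\cal E}_q={\cal E}_{q,1}\vee\cdots\vee{\cal E}_{q,k}$. Applying part (2) of Lemma \ref{lem:multiscale-euclidean-traintest} to the pair $(q,\Gamma_i(q))$ and using $\|q-\Gamma_i(q)\|_2=\gamma_i(q)\leq\gamma_k(q)=\gamma(q)$, each term is at most $\eps/k$, so a union bound gives $\Pr[{\cal E}_q]\leq\eps$. Defining the bad test points $X_{test}'=\{q\in X_{test}:\{\Gamma_1(q),\ldots,\Gamma_k(q)\}\not\subseteq P(q)\}$, linearity of expectation yields $E[|X_{test}'|]\leq\eps n_{test}$.

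Next I would argue that the dynamic program correctly accounts for every \emph{good} test point $q\in X_{test}\setminus X_{test}'$. For such $q$, all $k$ of its nearest neighbors in $X_{train}$ lie in its own cluster $P(q)$, so its predicted label---and hence whether it is corrupted---depends only on which labels inside $P(q)$ are flipped. Thus the per-cluster corruption optimized in Step 2 coincides, on good test points, with the true corruption, and the dynamic program of Step 3 computes exactly $\OPT_m(X_{train},X_{test}\setminus X_{test}')$. Taking $Z$ to be an optimal $m$-poison for the full instance and discarding the at most $|X_{test}'|$ bad test points it corrupts gives $\OPT_m(X_{train},X_{test}\setminus X_{test}')\geq\OPT_m(X_{train},X_{test})-|X_{test}'|$.

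Chaining these, the returned poison $Y$ satisfies $\corruption(X_{train},X_{test},Y)\geq\corruption(X_{train},X_{test}\setminus X_{test}',Y)=\OPT_m(X_{train},X_{test}\setminus X_{test}')\geq\OPT_m(X_{train},X_{test})-|X_{test}'|$, where the first inequality holds because restricting the count to good test points can only decrease it, and the equality is the dynamic-program optimality established above. Taking expectations and applying $E[|X_{test}'|]\leq\eps n_{test}$ completes the proof. The step I expect to require the most care is the identification of cluster-local with true corruption on good test points: this is where the definition of $X_{test}'$ through \emph{all} $k$ neighbors (not merely the $k$-th) is essential, and where part (2) of Lemma \ref{lem:multiscale-euclidean-traintest} must be invoked for test--train pairs---legitimate precisely because that bound ranges over all $p,q\in\mathbb{R}^d$ rather than only over $X_{train}$.
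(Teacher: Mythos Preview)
Your proposal is correct and follows essentially the same approach as the paper's own proof: define the bad test points whose $k$ nearest train neighbors are split by the partition, bound $\Pr[{\cal E}_q]\le\eps$ via the union bound and Lemma~\ref{lem:multiscale-euclidean-traintest}(2), then chain $\corruption(X_{train},X_{test},Y)\ge\OPT_m(X_{train},X_{test}\setminus X_{test}')\ge\OPT_m(X_{train},X_{test})-|X_{test}'|$ and take expectations. Your write-up is in fact slightly more explicit than the paper's in justifying why the separation bound applies to test--train pairs and why cluster-local corruption agrees with true corruption on good test points.
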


\begin{proof}
Let $Z\subseteq X_{train}$ be an optimal $k$-poison for $X_{test}$.
Recall that $P$ is the random partition sampled at Step 1.

For any $x\in X_{test}$, $i\in \mathbb{N}$, let ${\cal E}_{x,i}$ be the event that the cluster of $P$ containing $x$, does not contain the $i_{th}$-nearest neighbor of $x$ in $X_{train}$; i.e.~$\NN_i(x)\notin P(x)$.
Let also ${\cal E}_x$ be the event that the cluster of $P$ containing $x$, does not contain all of the $k$-nearest neighbors of $x$ in $X_{train}$; i.e.
\[
{\cal E}_x={\cal E}_{x,1} \vee \ldots \vee {\cal E}_{x,k}.
\]
Thus
\begin{align}
\Pr[{\cal E}_x] &= \Pr[{\cal E}_{x,1} \vee \ldots \vee {\cal E}_{x,k}] \notag \\
 &\leq \sum_{i=1}^k \Pr[{\cal E}_{x,i}] & \text{(union bound)} \notag \\
 &= \sum_{i=1}^k \Pr[P(x)\neq P(\NN_i(x)] \notag \\
 &\leq \sum_{i=1}^k \frac{\eps \|x-\NN_i(x)\|_2}{k \gamma(p)} & \text{(Lemma \ref{lem:multiscale-euclidean-traintest})} \notag \\
 &\leq k \frac{\eps \|x-\NN_k(x)\|_2}{k \gamma(p)}  \notag \\
 &= \eps \label{eq:PrEx_traintest}
\end{align}
Let 
\[
X' = \{x\in X_{test}:\{\NN_1(x),\ldots,\NN_k(x)\}\not\subseteq P(x)\}.
\]
By \eqref{eq:PrEx_traintest} and the linearity of expectation, it follows that 
\begin{align}
E[|X'|] = \sum_{x\in |X_{test}|} Pr[{\cal E}_x] \leq \eps|X_{test}| = \eps n_{test}. \label{eq:EXprime_traintest}
\end{align}

From \eqref{eq:EXprime_traintest}, it follows that,
\begin{align*}
\corruption(X_{train}, X_{test}, Y) \geq &~ \corruption(X_{train}, X_{test}\setminus X', Y) \\
 = &~ \OPT_m(X_{train}, X_{test}\setminus X') \\&~ \text{(by the dynamic program)}\\
 \geq &~  \OPT_m(X_{train}, X_{test}) - |X'|.
\end{align*}

Combining with \eqref{eq:EXprime_traintest} and by linearity of expectation we get
$E[\corruption(X,Y)]\geq \OPT_m(X)-E[|X'|] \geq \OPT_m(X) -\eps n_{test}$, which concludes the proof.
\end{proof}

\begin{proof}[Proof of Theorem \ref{thm:kNN2}]
The bound on the corruption on the test set follows Lemma \ref{lem:corruption_kNN_train_test}. The running time is dominated by Step 2 of \ref{alg:knn_poison_traintest} which takes time $(n_{train} + n_{test})\cdot 2^{2^{O(d+k/\eps)}}$.

\end{proof}

\section{Conclusion}
\label{sec:conclusions}

We have introduced an approximation algorithm along with provable guarantees for a label flipping poisoning attack against the geometric classification task of $k$-nearest neighbors. 
Our poisoning framework, specifically the application of approximation algorithms using random metric partitions could also be extended to propose similar defense algorithms.

\bibliography{cccg23_kNN_arxiv}
\bibliographystyle{abbrv}

\end{document}